\newcommand{\cmark}{\ding{51}}
\newcommand{\xmark}{\ding{55}}
\acrodef{MPC}[MPC]{Model Predictive Control}
\acrodef{QP}[QP]{Quadratic Program}
\acrodef{CBF}[CBF]{Control Barrier Function}
\newcommand{\vx}{{\boldsymbol x}}
\newcommand{\vu}{{\boldsymbol u}}
\newcommand{\StateSpace}{\mathcal{X}}
\newcommand{\ControlSpace}{\mathcal{U}}
\newcommand{\RealSpace}{\mathbb{R}}
\newcommand{\Rn}{\mathbb{R}^{n}}
\newcommand{\Rm}{\mathbb{R}^{m}}
\newcommand{\Natural}{\mathbb{Z}}
\newcommand{\calC}{\mathcal{C}} 
\newcommand{\calP}{\mathcal{P}} 
\newcommand{\calO}{\mathcal{O}} 
\newcommand{\calR}{\mathcal{R}} 
\newcommand{\calJ}{\mathcal{J}} 
\newcommand{\calS}{\mathcal{S}} 
\newcommand{\calN}{\mathcal{N}} 
\newcommand{\calY}{\mathcal{Y}} 
\newcommand{\vepsilon}{{\bm{\varepsilon}}}
\newcommand{\horizon}{T} 
\newcommand{\terminalcost}{l_{\horizon}} 
\newcommand{\runningcost}{l_{t}} 
\DeclareMathAlphabet{\mathmybb}{U}{bbold}{m}{n}
\newcommand{\IndicatorFunction}{\mathmybb{1}} 
\DeclareMathOperator*{\argmin}{arg\,min}
\newtheorem{definition}{Definition}
\newtheorem{theorem}{Theorem}
\newtheorem{remark}{Remark}
\theoremstyle{definition}
\theoremstyle{definition}
\theoremstyle{definition}
\newtheorem{assumption}{Assumption}
\newcommand{\shield}{\text{\tiny{\faShield*}}}
\let\save@mathaccent\mathaccent
\newcommand*\if@single[3]{%
  \setbox0\hbox{${\mathaccent"0362{#1}}^H$}%
  \setbox2\hbox{${\mathaccent"0362{\kern0pt#1}}^H$}%
  \ifdim\ht0=\ht2 #3\else #2\fi
  }
\newcommand*\rel@kern[1]{\kern#1\dimexpr\macc@kerna}
\newcommand*\widebar[1]{\@ifnextchar^{{\wide@bar{#1}{0}}}{\wide@bar{#1}{1}}}
\newcommand*\wide@bar[2]{\if@single{#1}{\wide@bar@{#1}{#2}{1}}{\wide@bar@{#1}{#2}{2}}}
\newcommand*\wide@bar@[3]{%
  \begingroup
  \def\mathaccent##1##2{%
    \let\mathaccent\save@mathaccent
    \if#32 \let\macc@nucleus\first@char \fi
    \setbox\z@\hbox{$\macc@style{\macc@nucleus}_{}$}%
    \setbox\tw@\hbox{$\macc@style{\macc@nucleus}{}_{}$}%
    \dimen@\wd\tw@
    \advance\dimen@-\wd\z@
    \divide\dimen@ 3
    \@tempdima\wd\tw@
    \advance\@tempdima-\scriptspace
    \divide\@tempdima 10
    \advance\dimen@-\@tempdima
    \ifdim\dimen@>\z@ \dimen@0pt\fi
    \rel@kern{0.6}\kern-\dimen@
    \if#31
      \overline{\rel@kern{-0.6}\kern\dimen@\macc@nucleus\rel@kern{0.4}\kern\dimen@}%
      \advance\dimen@0.4\dimexpr\macc@kerna
      \let\final@kern#2%
      \ifdim\dimen@<\z@ \let\final@kern1\fi
      \if\final@kern1 \kern-\dimen@\fi
    \else
      \overline{\rel@kern{-0.6}\kern\dimen@#1}%
    \fi
  }%
  \macc@depth\@ne
  \let\math@bgroup\@empty \let\math@egroup\macc@set@skewchar
  \mathsurround\z@ \frozen@everymath{\mathgroup\macc@group\relax}%
  \macc@set@skewchar\relax
  \let\mathaccentV\macc@nested@a
  \if#31
    \macc@nested@a\relax111{#1}%
  \else
    \def\gobble@till@marker##1\endmarker{}%
    \futurelet\first@char\gobble@till@marker#1\endmarker
    \ifcat\noexpand\first@char A\else
      \def\first@char{}%
    \fi
    \macc@nested@a\relax111{\first@char}%
  \fi
  \endgroup
}
\title{\LARGE \bf Safe Model Predictive Diffusion with Shielding} 
\author{Taekyung Kim$^{1}$, Keyvan Majd$^{2}$, Hideki Okamoto$^{2}$, Bardh Hoxha$^{2}$, Dimitra Panagou$^{1,3}$, Georgios Fainekos$^{2}$
\thanks{This research was performed while Taekyung Kim was an intern at Toyota Motor North America, Research \& Development.}
\thanks{$^{1}$Department of Robotics, $^{3}$Department of Aerospace Engineering, University of Michigan, Ann Arbor, MI, 48109, USA {\tt\footnotesize taekyung@umich.edu, dpanagou@umich.edu} } 
\thanks{$^{2}$Toyota Motor North America, Research \& Development, Ann Arbor, MI, 48105, USA {\tt\footnotesize <first\_name.last\_name>@toyota.com} } %
}
\begin{document}
\maketitle
\thispagestyle{empty}
\pagestyle{empty}

\begin{abstract}
Generating safe, kinodynamically feasible, and optimal trajectories for complex robotic systems is a central challenge in robotics. This paper presents Safe Model Predictive Diffusion (Safe MPD), a training-free diffusion planner that unifies a model-based diffusion framework with a safety shield to generate trajectories that are both kinodynamically feasible and safe by construction. By enforcing feasibility and safety on all samples during the denoising process, our method avoids the common pitfalls of post-processing corrections, such as computational intractability and loss of feasibility. We validate our approach on challenging non-convex planning problems, including kinematic and acceleration-controlled tractor-trailer systems. The results show that it substantially outperforms existing safety strategies in success rate and safety, while achieving sub-second computation times. \href{https://www.taekyung.me/safe-mpd}{\textcolor{red}{[Project Page]}}\footnote{Project page: \href{https://www.taekyung.me/safe-mpd}{https://www.taekyung.me/safe-mpd}} \href{https://github.com/cps-atlas/safe-mpd}{\textcolor{red}{[Code]}} \href{https://youtu.be/DQBeybU7EYI}{\textcolor{red}{[Video]}}
\end{abstract}


\section{INTRODUCTION}

Trajectory optimization is a cornerstone of robotics, enabling autonomous systems to generate goal-oriented motions consistent with their dynamics. Yet traditional nonlinear programming often struggles with the challenges inherent in real-world robotics tasks, such as non-convex objectives, complex nonlinear dynamics, and high-dimensional state-control spaces. Motivated by these challenges, diffusion-based planners have emerged as a compelling paradigm for trajectory optimization, viewing planning as probabilistic inference over trajectories and generating low-cost solutions by progressively denoising samples~\cite{janner_planning_2022, carvalho_motion_2023}.

Model-Based Diffusion~(MBD)~\cite{pan_model-based_2024} strengthens this paradigm by replacing learned score networks with principled scores derived from the system dynamics and cost function. This \emph{training-free} approach avoids collecting large datasets of expert demonstrations and naturally supports \emph{test-time} adaptation to new tasks. However, applying MBD to constrained kinodynamic planning faces two fundamental issues: (i) \emph{sampling inefficiency}, since feasibility and safety constraints concentrate probability mass onto a thin manifold, and (ii) the \emph{absence of safety guarantees}, which is unacceptable in safety-critical applications. 

\begin{figure*}[t]
\centering
\includegraphics[width=0.95\linewidth]{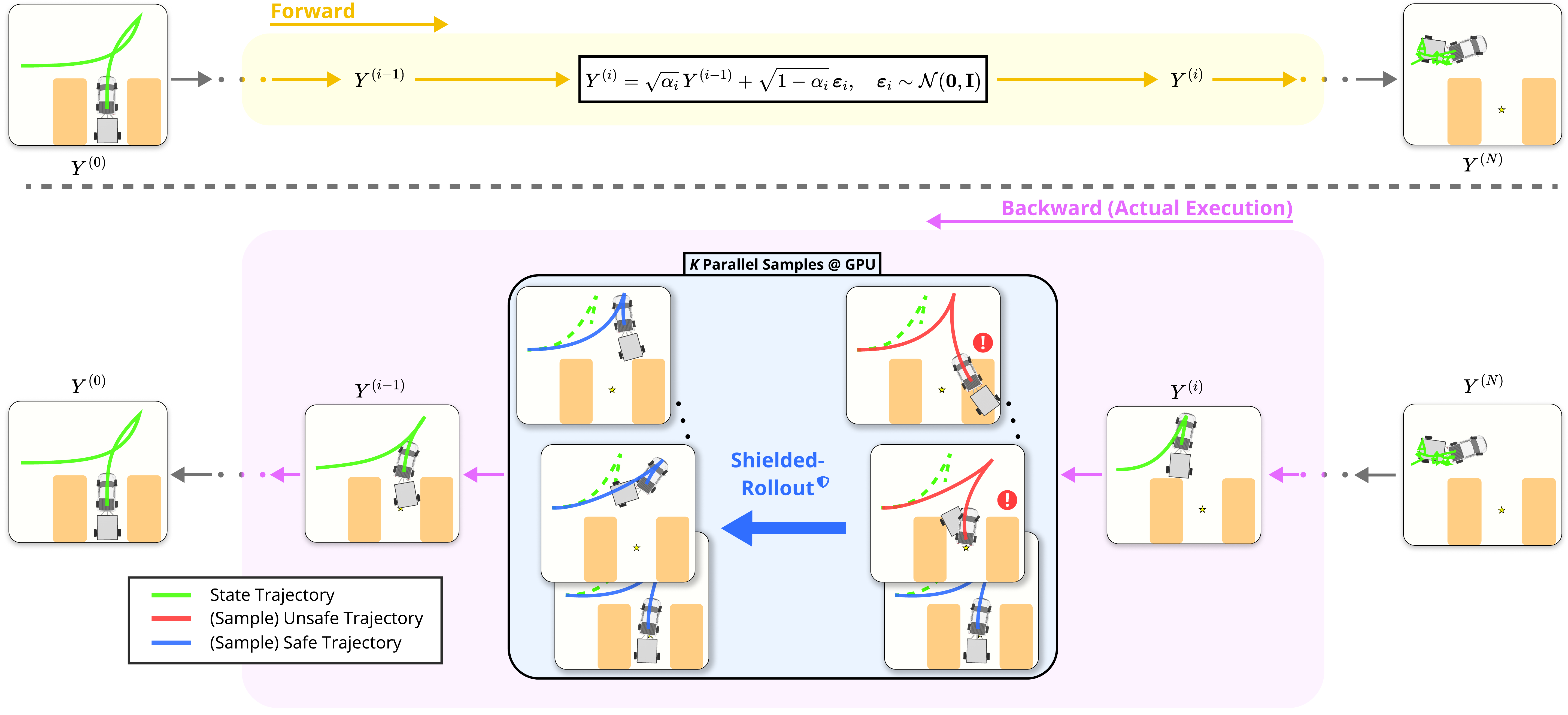}
    \caption{Overview of the Safe Model Predictive Diffusion (Safe MPD$^{\shield}$) algorithm. (a) The forward process gradually adds noise to an optimal trajectory. (b) Reverse (denoising) process with shielded rollout: from the current noisy estimate $Y^{(i)}$, $K$ perturbed candidates are drawn; some are initially unsafe (e.g., collisions or jackknifing for the tractor-trailer). Shielded rollout transforms each candidate into a kinodynamically feasible and safe trajectory, after which weighted averaging and score ascent update $Y^{(i-1)}$.
    }
    \label{fig:main}
\vspace{-5pt}
\end{figure*}

\subsection{Related Work}\label{sec:related}
There have been several attempts to impose safety on trajectories generated by diffusion models, primarily in model-free contexts. A straightforward approach is \textbf{filtering}, where generated trajectories are discarded if they are deemed unsafe~\cite{yun_guided_2024, qing_bitrajdiff_2025}. This is generally sample-inefficient because it may reject a large number of samples, and it is especially problematic when feasible trajectories occupy only a thin manifold. Another popular technique is \textbf{guidance}, which steers the denoised trajectory towards safe regions by gradient descent~\cite{janner_planning_2022, zhong_guided_2023, lee_learning_2025, huang_diffuse-cloc_2025}. While effective, this \emph{post-processing correction} can result in kinodynamically infeasible trajectories, undermining a critical requirement for many robotic systems to reliably execute planned motions. Furthermore, constructing a differentiable landscape for the safety objective is often non-trivial, particularly with non-convex obstacles or complex robot geometries. A third category of methods performs hard \textbf{projection} of the states onto the safe set~\cite{christopher_constrained_2024, christopher_neuro-symbolic_2025, romer_diffusion_2025}. This process can be computationally intensive, especially when the projection operation is non-convex and the state space is high-dimensional. Finally, some methods integrate \textbf{barrier functions} directly into the diffusion process~\cite{ma_constraint-aware_2025,xiao_safediffuser_2025}. For instance, SafeDiffuser proposes solving a Quadratic Program~(QP) with Control Barrier Function~(CBF) constraints at each denoising step~\cite{xiao_safediffuser_2025}. The computational overhead introduced by this method renders it intractable within the model-based diffusion framework, as it requires solving QPs for every parallel trajectory sample.

\subsection{Contributions}

In this paper, we propose a novel diffusion-based trajectory optimization algorithm, Safe Model Predictive Diffusion (Safe MPD$^{\shield}$). The main contributions of this work are:
\begin{itemize}
\item We propose Safe MPD$^{\shield}$ that integrates a safety shield directly into the diffusion process to guarantee kinodynamic feasibility and safety by construction.

\item We empirically show a dramatic improvement in sample efficiency by enforcing that all sampled trajectories within the denoising process are both feasible and safe.

\item We demonstrate that our approach is highly computationally efficient, achieving \emph{sub-second} planning times through a parallelized GPU implementation of our shielding mechanism.

\item We validate our method on challenging non-convex planning tasks including tractor-trailer systems, showing that it outperforms existing safety strategies, without requiring model-specific hyperparameter tuning.
\end{itemize}

\section{PRELIMINARIES \label{sec:preliminaries}}
\subsection{Problem Formulation \label{subsec:problem}}


Consider a discrete-time nonlinear system:
\begin{equation}
\vx_{t+1} = f(\vx_{t}, \vu_{t}),
\label{eq:dt_dynamics}
\end{equation}
where $\vx_{t} \in \StateSpace \subset \Rn$ is the state at time step $t \in \Natural^{+}$, and $\vu_{t} \in \ControlSpace \subset \Rm$ is the control input at time step $t$, with $\ControlSpace$ being a set of admissible controls for System~\eqref{eq:dt_dynamics}. The function $f: \StateSpace \times \ControlSpace \to \StateSpace$ represents the dynamics. 

Given the initial condition $\vx_{0} = \vx_{\textup{init}}$, we aim to solve a general trajectory optimization problem using diffusion:
\begin{subequations} \label{eq:to}
\begin{align}
\{\vx^{*}_{t}\}_{t=1}^{\horizon}, \{\vu^{*}_{t}\}_{t=0}^{\horizon-1} \coloneqq & \argmin_{\vx_{1:\horizon}, \vu_{0:\horizon-1}} \, J(Y) \label{eq:to_objective}  \\
\text{s.t.} \quad \vx_{t+1} = f(\vx_{t}, \vu_{t}), \ & t = 0,...,\horizon-1 ,\label{eq:to_dt_dynamics} \\
g(\vx_{t}) \leq 0, \ & t = 0,...,\horizon ,\label{eq:to_inequality_constraint} \\
\vu_{t} \in \ControlSpace, \ & t = 0,...,\horizon-1 . \label{eq:to_input_constraint}
\end{align}
\end{subequations}
We denote $\tau_{\vx} \coloneqq [\vx_{1},\dots,\vx_{\horizon}]$ as the state trajectory and $\tau_{\vu} \coloneqq [\vu_{0},\dots,\vu_{\horizon-1}]$ as the control sequence. We use $Y \coloneqq \{\tau_{\vx}, \tau_{\vu} \}$ to denote all decision variables. The objective function $J$ in \eqref{eq:to_objective} is defined as $J(Y) = \terminalcost(\vx_{\horizon}) + \sum_{t=0}^{\horizon-1} \runningcost(\vx_{t}, \vu_{t})$, where $\runningcost: \StateSpace \times \ControlSpace \to \RealSpace $ and $\terminalcost: \StateSpace \to \RealSpace$ are the stage cost and the terminal cost. $g: \StateSpace \to \RealSpace$ represents the inequality constraint. The parameter $\horizon$ represents the planning horizon.

Classical approaches solve \eqref{eq:to} via nonlinear programming, but these methods often struggle with non-convex objectives and constraints, complex nonlinear dynamics, and high-dimensional state space~$\StateSpace$ and control space~$\ControlSpace$. 
Recently, an alternative paradigm that bypasses these difficulties has gained significant interest by recasting trajectory optimization as a sampling problem and directly generating samples from the optimal trajectory distribution~\cite{janner_planning_2022, carvalho_motion_2023, pan_model-based_2024, lee_learning_2025}.



\textbf{Trajectory Optimization as Probabilistic Sampling:} We associate to \eqref{eq:to} a target distribution~$p_{0}$ on trajectories that factors into (i) optimality~\eqref{eq:to_objective}, (ii) dynamical feasibility~\eqref{eq:to_dt_dynamics}, and (iii) constraint satisfaction~\eqref{eq:to_inequality_constraint}:
\begin{equation}
p_{0}(Y) \propto p_{J}(Y) \, p_{f}(Y) \, p_{g}(Y) ,
\label{eq:target_distribution}
\end{equation}
with 
\begin{subequations}
\begin{align}
p_{J}(Y) &\propto \exp \left(- J(Y)/\lambda \right), \, \lambda>0, \label{eq:optimality_distribution} \\
p_{f}(Y) &\propto \prod_{t=0}^{\horizon-1} \delta \left(\vx_{t+1} = f(\vx_{t},\vu_{t}) \right), \label{eq:feasibility_distribution}\\
p_{g}(Y) &\propto \prod_{t=0}^{\horizon} \IndicatorFunction \left( g(\vx_{t}) \le 0 \right),\label{eq:constraint_distribution}
\end{align}
\end{subequations}
where $\delta(\cdot)$ is a Dirac delta function and $\IndicatorFunction(\cdot)$ is an indicator function. In words, $p_{0}$ places probability mass on trajectories that are kinodynamically feasible and satisfying constraints, and exponentially favors low-cost ones. Obtaining the solution $Y^{*}$ from solving the trajectory optimization problem in \eqref{eq:to} is equivalent to sampling from the target distribution in \eqref{eq:target_distribution} given a low temperature $\lambda \to 0$~\cite{pan_model-based_2024}.

In trajectory optimization tasks, since the dynamics~$f$ and objective function~$J$ are known, we can evaluate the unnormalized probability $p_{0}(Y)$ for any given trajectory $Y$. However, sampling directly from the target distribution $p_{0}$ is generally intractable because of its high dimensionality and the sparsity of the feasible manifold.
\subsection{Model-Based Diffusion\label{subsec:diffusion}}

To effectively sample from the target distribution $p_{0}$, diffusion iteratively refines samples starting from pure noise, which can be easily drawn from an isotropic Gaussian distribution. These denoising steps are referred to as the backward (or reverse) process, which reverses a predefined forward process that gradually corrupts data into pure noise~\cite{ho_denoising_2020}.

\textbf{Forward (noising) process:} Let us denote the variance schedule as $\{\beta_{i} \in (0,1)\}_{i=1}^{N}$, then $\alpha_{i} \coloneqq 1-\beta_{i}$, and $\bar{\alpha}_{i} \coloneqq \prod_{k=1}^{i}\alpha_{k}$~\cite{ho_denoising_2020}. Starting from $Y^{(0)} \sim p_{0}(\cdot)$, the forward Markov chain is
\begin{equation}
Y^{(i)} = \sqrt{\alpha_{i}} \, Y^{(i-1)} + \sqrt{1-\alpha_{i}} \, \vepsilon_{i},
\quad \vepsilon_{i}\sim\calN(\mathbf{0},\mathbf{I}),
\label{eq:forward_process}
\end{equation}
so that
\begin{equation}
p_{i\mid i-1}(\cdot \mid Y^{(i-1)}) \sim \calN \left(\sqrt{\alpha_{i}} \, Y^{(i-1)},(1-\alpha_{i}) \mathbf{I} \right).
\label{eq:forward_one_step}
\end{equation}
Since the noise at each noising step is independent, it yields
\begin{equation}
p_{i\mid 0}(\cdot \mid Y^{(0)}) \sim \calN \left(\sqrt{\bar{\alpha}_{i}} \, Y^{(0)},(1-\bar{\alpha}_{i}) \mathbf{I} \right).
\label{eq:forward_n_step}
\end{equation}


\textbf{Reverse (denoising) process via Monte Carlo score ascent:} The reverse process aims to recover a sample to the target distribution $p_{0}$ by starting with a sample drawn from $Y^{(N)} \sim \calN(\mathbf{0},\mathbf{I})$. This involves iteratively evaluating the posterior distribution, where the marginal is given by:
$p_{i-1}(Y^{(i-1)}) = \int p_{i-1\mid i}(Y^{(i-1)}\mid Y^{(i)})p_{i}(Y^{(i)}) \, \operatorname{d}Y^{(i)}.$
Unlike standard (model-free) diffusion models that rely on a neural network learned with a large number of data to estimate the score function~\cite{ho_denoising_2020}, Model-Based Diffusion~(MBD) exploits the prior information of the dynamics model~$f$ and objective function~$J$ to evaluate the score directly~\cite{pan_model-based_2024}. 

In each denoising step from $i$ to $i-1$, MBD performs one-step gradient ascent on $\log p_{i}(Y^{(i)})$:
\begin{equation}
Y^{(i-1)} = \frac{1}{\sqrt{\alpha_{i}}} \left( Y^{(i)} + (1 - \bar{\alpha}_{i}) \nabla_{Y^{(i)}} \log p_{i}(Y^{(i)}) \right) .
\label{eq:reverse_one_step}
\end{equation}
The crucial insight of MBD is to estimate the score function $\nabla_{Y^{(i)}} \log p_{i}(Y^{(i)})$ using a Monte Carlo approximation:
\begin{equation}
\nabla_{Y^{(i)}} \log p_{i}(Y^{(i)}) \approx -\frac{Y^{(i)}}{1-\bar{\alpha}_{i}} + \frac{\sqrt{\bar{\alpha}_{i}}}{1-\bar{\alpha}_{i}} \widebar{Y}^{(i)}.
\label{eq:score_approx}
\end{equation}
Here, $\widebar{Y}^{(i)}$ is a weighted average of a batch of candidate samples ${\calY^{(i)}_{k}}$, $k=1, \ldots,K$, which are drawn from the proposal distribution
\begin{equation} \label{eq:diffusion_distribution}
\calY^{(i)}_{k} \sim \calN \left(\frac{Y^{(i)}}{\sqrt{\bar{\alpha}_{i}}}, \, \left(\frac{1}{\bar{\alpha}_{i}} - 1\right)\mathbf{I} \right), 
\end{equation}
and then averaged with the weights evaluated using the known target distribution $p_{0}$:
\begin{equation} \label{eq:weighted_sum}
\widebar{Y}^{(i)} \coloneqq \frac{\sum_{k=1}^{K} \, \calY^{(i)}_{k} \, p_{0}(\calY^{(i)}_{k})}{\sum_{k=1}^{K} \, p_{0}(\calY^{(i)}_{k})}
\end{equation}

In essence, at each denoising step, MBD generates a set of potential trajectories that are perturbed around the current noisy estimate, scores them using the true objective and constraints encoded in $p_{0}$, and uses this information to perform a principled update, guiding the sample from noise toward an improved solution. With the number of diffusion steps $N=1$, MBD reduces to the Cross Entropy Method (CEM)~\cite{rubinstein_cross-entropy_1999}.

\begin{remark}
As the generation and evaluation of these $K$ candidate samples are independent, this process is highly parallelizable. With the aid of modern GPUs/TPUs, each denoising step can be significantly accelerated~\cite{williams_model_2017}.
\end{remark}

\begin{remark}
MBD can be used for generic optimization problems other than trajectory optimization. 
\end{remark}


\section{SAFE MODEL PREDICTIVE DIFFUSION \label{sec:method}}

Although MBD can map noise to low-cost trajectories without learned neural networks, its direct application to \eqref{eq:to} reveals two fundamental challenges.

First, sampling efficiency can be extremely low. Recall the target distribution is given as $p_{0}(Y) \propto p_{J}(Y) \, p_{f}(Y) \, p_{g}(Y)$. The Dirac delta function for equality constraint~$p_{f}(\cdot)$ and the indicator function for inequality constraint~$p_{g}(\cdot)$ assign non-zero probability density only to kinodynamically feasible and constraint satisfying trajectories, which form a thin manifold of Lebesgue measure zero. Consequently, nearly all samples~${\calY^{(i)}_{k}}$ will receive zero weights, rendering the Monte Carlo update ineffective.

Second, while MBD inherits the computational advantages of sampling-based methods like CEM~\cite{rubinstein_cross-entropy_1999} and MPPI~\cite{williams_model_2017}, it also shares their fundamental drawback: a lack of safety guarantees. Even if the candidate samples are filtered to be safe, the weighted averaging step \eqref{eq:weighted_sum} does not guarantee that the resulting updated trajectory will remain safe~\cite{yin_shield_2023}. Also, there is no straightforward way in these frameworks to ensure that the system can be rendered safe for all future time from the terminal state~$\vx_{\horizon}$.

In this work, we introduce \textbf{\emph{Safe Model Predictive Diffusion (Safe MPD$^{\shield}$)}} to address these limitations: (i) we ensure that all trajectory samples at every denoising step are feasible~(\autoref{subsec:mpd}) and safe~(\autoref{subsec:shielding}-\autoref{subsec:main}), dramatically improving sample efficiency; and (ii) we provide formal guarantees that the final diffusion trajectory~$Y^{(0)}$ is both kinodynamically feasible and safe (\autoref{subsec:main}).
\subsection{Model Predictive Diffusion\label{subsec:mpd}}

We first show how to ensure all trajectory samples are \textbf{\emph{kinodynamically feasible}}. For each noisy candidate $\calY_{k}$ drawn from \eqref{eq:diffusion_distribution}, we extract its control sequence $[\vu_{0},\dots,\vu_{\horizon-1}]$. This control sequence is then simulated forward from the initial state $\vx_{0}$ using the dynamics model $f$, producing a new kinodynamically feasible trajectory $\calY^{f}_{k}$. This feasible trajectory then replaces the original sample $\calY_{k}$ in the weighted averaging step~\eqref{eq:weighted_sum}. This technique is analogous to shooting methods~\cite{betts_practical_2010} and MPPI~\cite{williams_model_2017}, and was first adapted for MBD in \cite{pan_model-based_2024}. We refer to this framework as \textbf{\emph{Model Predictive Diffusion (MPD)}}. However, while MPD guarantees kinodynamic feasibility, it still relies on evaluating samples using $p_{J}(\cdot) \, p_{g}(\cdot)$, which remains sample-inefficient and lacks formal safety guarantees.

\subsection{Shielded Rollout\label{subsec:shielding}}

Now, we show how to ensure all trajectory samples and the final trajectory from MPD are safe. Unlike prior methods described in \autoref{sec:related} that rely on post-processing corrections and can be computationally expensive, we propose \textbf{\emph{Shielded Rollout}} to ensure safety, which can be seamlessly integrated into the MPD process. Inspired by model predictive shielding~\cite{bastani_safe_2021-1} and gatekeeper \cite{agrawal_gatekeeper_2024}, the proposed shielded rollout ensures the system remains \textbf{\emph{safe for all future time}}. 


To define the shielded rollout process, we first define the backup policy~$\pi_{\textup{backup}}$ and the corresponding sets. We denote $\calS$ as the set of \emph{(instantaneously) safe} states, i.e.,
\begin{equation}\label{eq:safe_set}
\calS \coloneqq \{\vx \in \StateSpace \mid g(\vx) \leq 0\}.
\end{equation}

\begin{definition}[Controlled-Invariant Set]\label{def:controlled_invraiant_set}
Given a policy $\pi:\StateSpace \to \ControlSpace$, a set
$\calC \subseteq \calS$ is \emph{controlled-invariant} under $\pi$ if for all $\vx_{t_{0}} \in \calC$, the solution $\vx_{t}$ of the closed-loop system $\vx_{t+1} = f(\vx_{t},\pi(\vx_{t}))$ from initial condition $\vx_{0} = \vx_{t_{0}}$ satisfies $\vx_
{t} \in \calC$ for all $t \geq t_{0}$.
\end{definition}

\begin{definition}[Invariance Policy]\label{def:invariance_policy}
An \emph{invariance policy} for $\calC$ is a policy $\pi_{\textup{inv}}: \calC \to \ControlSpace$ that renders $\calC$ controlled-invariant.
\end{definition}

\begin{definition}[Recovery Policy]\label{def:recovery_policy}
A policy $\pi_{\textup{rec}}:\calS \to \ControlSpace$ is called a \emph{recovery policy} to a set $\calC \subseteq \calS$ if, for the closed-loop system $\vx_{t+1}=f(\vx_{t},\pi_{\textup{rec}}(\vx_{t}))$, the set $\calC$ is reachable from any state in $\calS$ within a fixed time $T_{B}<\infty$, i.e.,
\begin{equation}
\vx_{t_{0}} \in \calS \implies \vx_{t_{0}+T_{B}} \in \calC .
\end{equation}
\end{definition}

Given an invariance policy $\pi_{\textup{inv}}$ and a recovery policy $\pi_{\textup{rec}}$ for $\calC$, we can define the backup policy~$\pi_{\textup{backup}}: \calS \to \ControlSpace$ as
\begin{equation}\label{eq:backup_policy}
\pi_{\textup{backup}}(\vx)=
\begin{cases}
\pi_{\textup{inv}}(\vx), & \text{if} \, \vx \in \calC\\
\pi_{\textup{rec}}(\vx), & \text{otherwise}.
\end{cases}
\end{equation}
We assume that a backup policy $\pi_{\textup{backup}}$ for $\calC$ is known, which can often be designed using established methods such as simplex architectures or reachability analysis~\cite{wabersich_data-driven_2023}. Next, we define the condition of a \emph{valid} state trajectory, which enables the construction of a computationally efficient monitor.
\begin{definition}[Valid]\label{def:valid}
A state trajectory~$\hat{\tau}_{\vx}=[\vx_{0},\dots,\vx_{T_{B}}]$ is \emph{valid} if the trajectory is safe w.r.t. the safe set~$\calS$ over a finite interval:
\begin{equation}
\vx_{t} \in \calS\ \quad \forall t \in \{0, \dots, T_{B}\},
\end{equation}
and the trajectory reaches $\calC$ at $T_{B}$:
\begin{equation}
\vx_{T_{B}} \in \calC .
\end{equation}
\end{definition}

\begin{remark}
Critically, checking whether a trajectory is valid only requires numerical forward integration of the closed-loop system over the finite interval $\{0,\ldots,T_{B}\}$. This is computationally lightweight and naturally parallelizable across trajectory samples.
\end{remark}

\begin{algorithm}[t]\footnotesize
\caption{\texttt{Rollout}($\vx_{0}, \pi, T$)}
\label{alg:rollout}
\DontPrintSemicolon
\KwIn{Initial state~$\vx_0$; policy~$\pi$; rollout horizon~$T$}
\KwOut{State trajectory $\tau_{\vx}$}
\BlankLine
$\vx \gets \vx_{0}$; \, $\tau_{\vx}[0] \gets \vx_{0}$\;
\For{$t \gets 0$ \KwTo $T-1$}{%
    $\vx \gets f(\vx,\pi(\vx))$\;
    $\tau_{\vx}[t+1] \gets \vx$\;
}%
\Return{$\tau_{\vx}$}
\end{algorithm}

\begin{algorithm}[t]\footnotesize
\caption{\texttt{Shielded-Rollout$^{\shield}$($\vx_{0}, \tau_{\vu}$)}}
\label{alg:shielded_rollout}
\DontPrintSemicolon
\KwIn{Initial state~$\vx_0$; nominal control sequence~$\tau_{\vu}$; backup policy~$\pi_{\textup{backup}}$}
\KwOut{Shielded state trajectory $\tau_{\vx}^{\shield}$; Shielded control sequence $\tau_{\vu}^{\shield}$}
\BlankLine
$\vx \gets \vx_0$; \, $\tau_{\vx}^{\shield}[0] \gets \vx_{0}$\;
\For{$t \gets 0$ \KwTo $\horizon-1$}{%
    $\vu_{\textup{nom}} \gets \tau_{\vu}[t]$\;
    $\hat{\vx} \gets f(\vx,\vu_{\textup{nom}})$\;
    $\hat{\tau}_{\vx} \gets \texttt{Rollout}(\hat{\vx}, \pi_{\textup{backup}}, T_{B})$\;
    \If()%
       {$\hat{\tau}_{\vx}$ is valid by \autoref{def:valid}}{%
            $\vx \gets \hat{\vx}$; \, $\tau_{\vx}^{\shield}[t+1] \gets \hat{\vx}$; \, $\tau_{\vu}^{\shield}[t] \gets \vu_{\textup{nom}}$\;
    }\Else{%
        \For(\tcp*[f]{fallback to backup})%
            {$t' \gets t$ \KwTo $\horizon-1$}{%
            $\vu_{\textup{backup}} \gets \pi_{\textup{backup}}(\vx)$; \, $\tau_{\vu}^{\shield}[t'] \gets \vu_{\textup{backup}}$\;
            $\vx \gets f(\vx,\vu_{\textup{backup}})$; \, $\tau_{\vx}^{\shield}[t'+1] \gets \vx$\;
        }
        \textbf{break}\;
    }%
}%
\Return{$\tau_{\vx}^{\shield}$, $\tau_{\vu}^{\shield}$}
\end{algorithm}

The proposed shielded rollout takes in a potentially unsafe nominal control sequence $\tau_{\vu}$ and produces a provably-safe trajectory~$\tau_{\vx}^{\shield}$. Although the system starts within $\calC$, the nominal control input (e.g., from diffusion), which is not drawn from $\pi_{\textup{inv}}$, may attempt to drive the system outside the safe set $\calS$ to achieve task objectives. To prevent this, our method acts as a safety shield. At each time step $t$, it first computes the prospective next state~$\hat{\vx}_{t+1} \coloneqq f(\vx_{t},\vu_{\textup{nom},t})$ for the nominal input $\vu_{\textup{nom}, t} \in \tau_{\vu}$. It then performs a $T_{B}$-step rollout from $\hat{\vx}_{t+1}$ using the backup policy~$\pi_{\textup{backup}}$ and checks the validity of the simulated trajectory~$\hat{\tau}_{\vx}$ as in \autoref{def:valid}. The standard rollout is shown in \autoref{alg:rollout}. If valid, the nominal input~$\vu_{\textup{nom}, t}$ is accepted; otherwise, the system switches to $\pi_{\textup{backup}}$ for the remainder of the horizon. The procedure is outlined in detail in \autoref{alg:shielded_rollout}.

\begin{assumption}
Our analysis is based on a discrete-time formulation. We assume that the system's continuous trajectory between two consecutive safe states, $\vx_{t}$ and $\vx_{t+1}$, remains within the safe set~$\calS$.
\end{assumption}

\begin{theorem}[\texttt{Shielded-Rollout$^{\shield}$}]\label{thm:shield}
Given any initial state~$\vx_{0} \in \calC$, the shielded state trajectory~$\tau_{\vx}^{\shield}$ generated by \autoref{alg:shielded_rollout} enables the system to remain in the safe set~$\calS$ for all future time, i.e., $t\geq0$.
\end{theorem}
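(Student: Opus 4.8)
The plan is to argue by cases on whether the shield ever triggers a fallback, and in each case exhibit an explicit infinite-horizon safe extension of the trajectory produced over $\{0,\dots,\horizon\}$. The key structural fact is that at every accepted step, \autoref{alg:shielded_rollout} has certified (via \autoref{def:valid}) that the backup policy, launched from the new state, produces a valid trajectory — one that stays in $\calS$ for $T_B$ steps and lands in $\calC$. Combined with \autoref{def:controlled_invariant_set} and \autoref{def:invariance_policy}, which guarantee $\pi_{\textup{inv}}$ keeps $\calC$ (hence $\calS$) invariant forever, this gives a certificate of safety for all future time at every state along $\tau_{\vx}^{\shield}$.

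First I would handle the \textbf{fallback case}: suppose the loop breaks at some step $t^\star \le \horizon-1$, i.e., the rollout from $\hat{\vx}_{t^\star+1}$ was \emph{not} valid, so the algorithm discards $\vu_{\textup{nom},t^\star}$ and applies $\pi_{\textup{backup}}$ from $\vx_{t^\star}$ onward. The crucial observation is that $\vx_{t^\star}$ is a previously-accepted state (or $\vx_0 \in \calC$ if $t^\star=0$), and at the step that produced $\vx_{t^\star}$ the algorithm verified that \texttt{Rollout}$(\vx_{t^\star}, \pi_{\textup{backup}}, T_B)$ is valid. Therefore running $\pi_{\textup{backup}}$ from $\vx_{t^\star}$ keeps the state in $\calS$ for the next $T_B$ steps and reaches $\calC$ at step $t^\star+T_B$; from there, by \autoref{def:recovery_policy} and \autoref{def:invariance_policy} together with \autoref{def:controlled_invariant_set}, continuing with $\pi_{\textup{backup}}$ (which equals $\pi_{\textup{inv}}$ on $\calC$) keeps the state in $\calC \subseteq \calS$ for all subsequent time. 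Since \autoref{alg:shielded_rollout} applies exactly $\pi_{\textup{backup}}$ on $\{t^\star,\dots,\horizon-1\}$, the finite trajectory is safe on this interval, and the same policy extended past $\horizon$ certifies safety for all $t \ge 0$.

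Next the \textbf{no-fallback case}: every nominal input is accepted, so for each $t \in \{0,\dots,\horizon-1\}$ the state $\vx_{t+1} = \hat{\vx}_{t+1}$ passed the validity check, in particular $\vx_{t+1} \in \calS$ (take the first element of its valid backup rollout), and $\vx_0 \in \calC \subseteq \calS$ by hypothesis; hence $\tau_{\vx}^{\shield}$ is safe on $\{0,\dots,\horizon\}$. For the infinite-horizon extension, note that the \emph{last} accepted state $\vx_{\horizon}$ carries a validity certificate: \texttt{Rollout}$(\vx_{\horizon}, \pi_{\textup{backup}}, T_B)$ is valid, so appending the backup rollout keeps the system in $\calS$ for $T_B$ steps, reaches $\calC$, and then $\pi_{\textup{inv}}$ holds it in $\calC$ forever — again invoking Definitions~\ref{def:controlled_invariant_set}--\ref{def:recovery_policy}. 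Assumption~1 is what lets us pass from the discrete safe samples to continuous-time safety between consecutive steps, closing the argument.

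The main obstacle I anticipate is purely bookkeeping rather than conceptual: being careful about \emph{which} state carries the validity certificate that justifies the fallback. The subtlety is that when the check fails at $\hat{\vx}_{t^\star+1}$, the algorithm does \emph{not} apply the backup from $\hat{\vx}_{t^\star+1}$ but from $\vx_{t^\star}$ — so one must track that $\vx_{t^\star}$ itself was certified at the previous iteration (or is the given $\vx_0 \in \calC$), which requires a small induction on accepted steps establishing the invariant ``every state written to $\tau_{\vx}^{\shield}$ at an accepted step has a valid backup rollout.'' Once that invariant is stated cleanly, both cases collapse to the same two-phase ($\pi_{\textup{rec}}$ then $\pi_{\textup{inv}}$) safety certificate, and the theorem follows.
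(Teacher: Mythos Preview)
Your proposal is correct and follows essentially the same approach as the paper's proof: both rest on the invariant that every accepted state carries a validity certificate (a valid $T_B$-step backup rollout ending in $\calC$), which together with Definitions~\ref{def:controlled_invraiant_set}--\ref{def:recovery_policy} extends to infinite-horizon safety. The paper organizes this as a single induction on $t$ with a per-step case split (valid vs.\ invalid nominal), whereas you first split globally on whether the fallback ever fires and then invoke the same induction on accepted steps; this is a cosmetic reordering rather than a different argument, and the ``obstacle'' you anticipate (tracking which state carries the certificate when the fallback triggers) is exactly the content of the paper's Case~2.
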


\begin{proof}
We prove the claim by induction.

\textit{Base Case ($t=0$)}: By assumption, $\vx_{0} \in \calC \subseteq \calS$.

\textit{Induction Step:} Assume that $\vx_{t}\in\calS$ for some $t\in\{1,\ldots,\horizon-1\}$. We now show that the subsequent state~$\vx_{t+1}$ also remains in $\calS$. As described in \autoref{alg:shielded_rollout}, the algorithm evaluates the nominal control input~$\vu_{\textup{nom}, t} \in \tau_{\vu}$ by first computing $\hat{\vx}_{t+1}=f(\vx_{t},\vu_{\textup{nom}, t})$. This leads to two cases:

\begin{enumerate}
\item \textit{Case 1 (Valid nominal step):}  
If the simulated trajectory $\hat{\tau}_{\vx}$ from $\texttt{Rollout}(\hat{\vx}_{t+1}, \pi_{\textup{backup}}, T_{B})$ is valid, \autoref{def:valid} guarantees $\hat{\vx}_{t+1} \in \calS$ \emph{and} $\pi_{\textup{backup}}$ can drive $\hat{\vx}_{t+1}$ into $\calC$ within $T_{B}$ while staying in $\calS$. Hence $\vx_{t+1}:=\hat{\vx}_{t+1} \in \calS$ and $\calC$ is reachable no later than $T_{B}$.

\item \textit{Case 2 (Invalid nominal step):}  
If it was not valid, \texttt{Shielded-Rollout$^{\shield}$} applies $\pi_{\textup{backup}}$. Since the nominal control inputs~$\vu_{\textup{nom}, (\cdot)}$ in previous steps would have been applied only when the simulated trajectories were valid, we have $\vx_{t+1} \in \calS$ by \autoref{def:valid}. Moreover, the system reaches $\calC$ within $T_{B}$ under $\pi_{\textup{rec}}$.
\end{enumerate}

Thus $\vx_{t+1} \in \calS$ in all cases, completing the induction. 

Moreover, the system at $\vx_{\horizon}$ can enter $\calC$ after at most time $T_{B}$. Then $\pi_{\textup{inv}}$ ensures $\vx_{t} \in \calC \subseteq \calS$ for all future time $t \geq \horizon + T_{B}$.
\end{proof}
\subsection{Main Algorithm\label{subsec:main}}
\begin{algorithm}[t]\footnotesize
\caption{Safe Model Predictive Diffusion$^{\shield}$}
\DontPrintSemicolon
\KwIn{noise schedule $\{\bar{\alpha}_i\}_{i=1}^{N}$; denoising steps $N$; number of samples $K$}
\KwOut{safe optimized trajectory}

\BlankLine
\textbf{Initialization:} draw $Y^{(N)}\sim\mathcal{N}(\mathbf{0},\mathbf{I})$

\For(){$i \leftarrow N\ \textbf{to}\ 1$}{
    \tcp{-- Denoising step -- //}
    
    \textbf{Sampling:} $\calY^{(i)}_{1:K} \, \sim \, \mathcal{N}\left(\frac{Y^{(i)}}{\sqrt{\bar{\alpha}_{i}}},\,
    \left(\frac{1}{\bar{\alpha}_{i}}-1\right)\mathbf{I}\right)$ \\[3pt]
    
    \textbf{Shielded Rollout:} $\calY^{(i) \shield}_{1:K}\leftarrow\texttt{Shielded-Rollout}^{\shield} (\calY^{(i)}_{1:K})$ \\[3pt]


    \textbf{Weighted sum:} $\widebar{Y}^{(i)} \coloneqq \frac{\sum_{k=1}^{K} \, \calY^{(i) \shield}_{k} \, p_{0}(\calY^{(i) \shield}_{k})}{\sum_{k=1}^{K} \, p_{0}(\calY^{(i) \shield}_{k})}$ \\[3pt]

        \textbf{Score estimate:} 
    $\nabla_{Y^{(i)}} \log p_{i}(Y^{(i)}) \approx -\frac{Y^{(i)}}{1-\bar{\alpha}_{i}} + \frac{\sqrt{\bar{\alpha}_{i}}}{1-\bar{\alpha}_{i}} \widebar{Y}^{(i)}$ \\[3pt]
    
    \textbf{Score-based update:} \\[3pt]
    \qquad $Y^{(i-1)}\leftarrow
     \frac{1}{\sqrt{\alpha_{i}}} \left( Y^{(i)} + (1 - \bar{\alpha}_{i}) \nabla_{Y^{(i)}} \log p_{i}(Y^{(i)}) \right) $
}
\Return{\texttt{\textup{Shielded-Rollout}}$^{\shield}(Y^{(0)})$}
\label{alg:safe_mpd}
\end{algorithm}

Finally, we present our main algorithm, Safe MPD$^{\shield}$. A schematic overview is shown in Fig.~\ref{fig:main}, and as detailed in \autoref{alg:safe_mpd}, it integrates \texttt{Shielded-Rollout$^{\shield}$} at two critical stages.

\textbf{Within the diffusion process: }
Within each denoising step, all $K$ candidate trajectory samples $\calY^{(i)}_{1:K}$ are passed through \texttt{Shielded-Rollout$^{\shield}$} to produce $\calY^{(i) \shield}_{1:K}$ that lie strictly on the feasible and safe manifold. This offers two significant advantages for trajectory optimization. (i) First, since every shielded sample $\calY^{(i) \shield}_{k}$ is guaranteed to be safe and feasible, the probability terms for feasibility and safety are constant across all such samples and can thus be disregarded from the original target distribution \eqref{eq:target_distribution}. Specifically, for any two samples, $k_{1}, k_{2} \in \{1,\ldots,K\}$: 
\begin{equation}
p_{f}(\calY^{(i) \shield}_{k_{1}}) = p_{f}(\calY^{(i) \shield}_{k_{2}}) \,\, \text{and} \,\, p_{g}(\calY^{(i) \shield}_{k_{1}}) = p_{g}(\calY^{(i) \shield}_{k_{2}}).
\end{equation}
The target distribution in Safe MPD$^{\shield}$ therefore simplifies to depend solely on the optimality factor:
\begin{equation} \label{eq:new_target_distribution}
p_{0}(Y) \propto p_{J}(Y) .
\end{equation}
This \emph{improves sample efficiency}, since no computational effort is wasted on samples that would otherwise receive zero weight due to constraint violations. (ii) Second, the diffusion process complements the conservative nature of safety filters~\cite{agrawal_gatekeeper_2024, romer_diffusion_2025}. Applying safety filters only at the last layer of the trajectory generation often overly constrains the solution, reducing overall performance. The iterative score ascent continuously pushes the trajectory distribution toward lower-cost regions, which we find empirically helps the optimizer overcome local minima while shielding preserves safety.

\textbf{On the final trajectory: }
Furthermore, we apply \texttt{Shielded-Rollout$^{\shield}$} to the final trajectory $Y^{(0)}$ from the diffusion process. This guarantees that the trajectory returned by Safe MPD$^{\shield}$ is kinodynamically feasible and safe, satisfying all inequality constraints by construction during its execution. Moreover, it guarantees that the system can be rendered safe from the terminal state $\vx_{\horizon}$ for all future time.

\section{RESULTS \label{sec:results}}

Our experimental evaluations aim to answer the following key questions: \textbf{(Q1)} Can our method solve complex, non-convex trajectory optimization problems with kinodynamic constraints, where safety depends on factors like inertia and acceleration limits? \textbf{(Q2)} Is our method scalable to different dynamical systems without requiring model-specific hyperparameter tuning? \textbf{(Q3)} Can the proposed shielded rollout be integrated into the MPD framework in a computationally efficient manner? \textbf{(Q4)} Are the resulting trajectories kinodynamically feasible and executable by tracking controllers?
\subsection{Dynamical Systems and Environments \label{subsec:dynamics}}

To address \textbf{Q1} and \textbf{Q2}, we evaluate our algorithm on a series of increasingly challenging dynamical models. The problems we consider involve scenarios where safety cannot be na\"ively achieved by simply stopping due to the dynamics of high-order systems and \emph{input constraints}. The evaluation task is an automated parking scenario, where the vehicle must navigate a cluttered environment with $N_{\textup{obs}}$ obstacles modeled as rectangles and circles to reach a target configuration. We denote the set of obstacles as $\calO \coloneqq \bigcup_{j=1}^{N_{\textup{obs}}} \calO_{j}$.

We consider: (i) a kinematic bicycle, (ii) a kinematic tractor-trailer system, and (iii) an acceleration-controlled tractor-trailer system. The dynamics of the discrete-time kinematic tractor-trailer model with sampling time $T_{s} > 0$ is given by:
\begin{align}
p^{x}_{t+1} &= p^{x}_{t} + T_{s} v_{t} \cos(\theta^{1}_{t}), \, p^{y}_{t+1} = p^{y}_{t} + T_{s} v_{t} \sin(\theta^{1}_{t}), \label{eq:kinematic_bicycle_position} \\
\theta^{1}_{t+1} &= \theta^{1}_{t} + T_{s} \frac{v_{t}}{\ell_1} \tan(\delta_{t}), \label{eq:kinematic_bicycle_theta1} \\
\theta^{2}_{t+1} &= \theta^{2}_{t} + \nonumber \\
T_{s} & \frac{v_{t}}{\ell_2} \left( \sin(\theta^{1}_{t} - \theta^{2}_{t})- \frac{\ell_h}{\ell_1} \cos(\theta^{1}_{t} - \theta^{2}_{t}) \tan(\delta_{t}) \right). \label{eq:kinematic_tt}
\end{align}
The state of the system is $\vx_{\textup{tt}} = [p^{x}, p^{y}, \theta^{1}, \theta^{2}]^{\top}$, where $(p^{x}, p^{y})$ denotes the position of the rear axle of the tractor, and $\theta^{1}$ and $\theta^{2}$ are the heading angles of the tractor and the trailer, respectively. The control input is $\vu_{\textup{tt}} = [v, \delta]^{\top}$, where $v$ is the longitudinal velocity of the tractor and $\delta$ is the steering angle of the front wheels. The geometric parameters $\ell_{1}, \ell_{2}, \ell_{h}$ denote the tractor wheelbase, trailer length, and hitch length, respectively. The kinematic bicycle model is described simply by \eqref{eq:kinematic_bicycle_position}-\eqref{eq:kinematic_bicycle_theta1}.

We also introduce an acceleration-controlled tractor-trailer system with second-order dynamics (see Fig.~\ref{fig:acc_tt}). We augment the state with velocity and steering angle, yielding $\vx_{\textup{acc-tt}} = [p^{x}, p^{y}, \theta^{1}, \theta^{2}, v, \delta]^{\top}$. The control inputs are now the longitudinal acceleration and the steering rate, $\vu_{\textup{acc-tt}} = [a, \omega]^{\top}$. The dynamics is described by \eqref{eq:kinematic_bicycle_position}-\eqref{eq:kinematic_tt}, and the following:
\begin{equation} \label{eq:acc_tt}
v_{t+1} = v_{t} + T_{s} \, a_{t}, \quad \delta_{t+1} = \delta_{t} + T_{s} \, \omega_{t}
\end{equation}
In all three models, the \emph{control inputs are \textbf{bounded}} ($\ControlSpace \neq \RealSpace^{2}$).

Trajectory planning and control for tractor-trailer systems are challenging, particularly during backward maneuvers, due to their highly nonlinear and unstable dynamics. An additional safety constraint on the hitch angle is also required to prevent jackknifing, i.e., $|\theta^{1}-\theta^{2}| \leq \theta_{\textup{max}}$. Furthermore, the vehicle's body is composed of two disjoint geometries, the tractor $\calR_{\textup{tr}}(\vx) \subset \RealSpace^{2}$ and the trailer $\calR_{\textup{tl}}(\vx) \subset \RealSpace^{2}$ at $\vx$, which makes the collision-free state space non-convex. This non-convexity makes safety enforcement via state projection computationally expensive.

For our shielded rollout algorithm, we design intuitive backup policies. For the kinematic bicycle and tractor-trailer models, where velocity is a control input, both $\pi_{\textup{inv}}$ and $\pi_{\textup{rec}}$ simply apply zero velocity, $v=0$, to stop the system. For the acceleration-controlled tractor-trailer, $\pi_{\textup{rec}}$ applies maximum acceleration $a_{\textup{max}}$ or deceleration $-a_{\textup{max}}$ to drive the velocity $v$ to zero. Once $v=0$ is achieved, $\pi_{\textup{inv}}$ applies $a=0$ to maintain a stationary state. Further examples of backup-controller design for other dynamical systems can be found in~\cite{bastani_safe_2021-1, agrawal_gatekeeper_2024}.

\begin{figure}[tbp]
\centering
\includegraphics[width=0.95\linewidth]{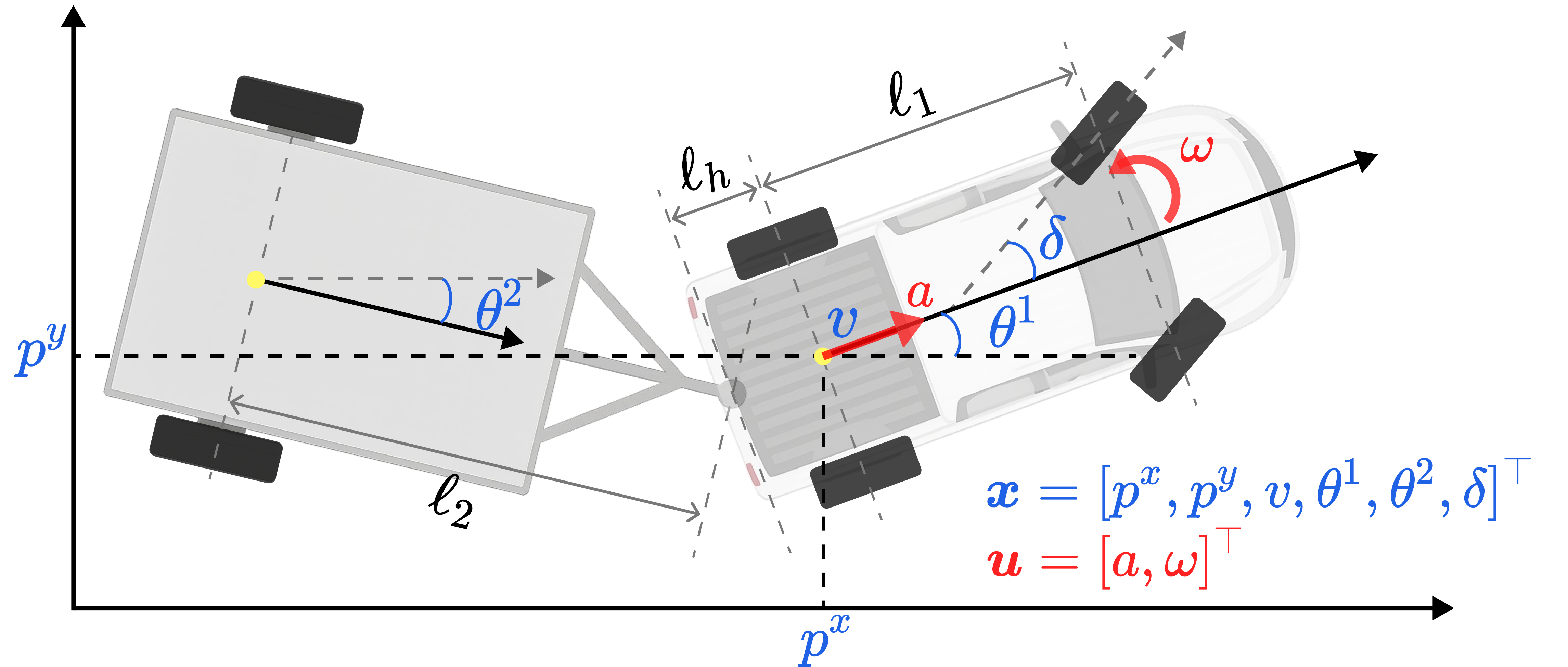}
\caption{
Illustration of an acceleration-controlled tractor-trailer system.}
\label{fig:acc_tt}
\vspace{-5pt}
\end{figure}

\subsection{Experimental Setup \& Baseline Methods \label{subsec:setup}}
\begin{table*}[t]
\centering
\caption{Performance comparison of model-based diffusion planners with different safety strategies across three dynamics models: (A) kinematic bicycle~(Bicycle), (B) kinematic tractor trailer~(TT), and (C) acceleration-controlled tractor trailer~(Accel. TT). For each model, 100 trials were conducted with different initial conditions. The best and comparable results are marked in \textbf{bold}. $^{*}$For the projection method, the environment was simplified to contain only 6 circular obstacles near the goal position, whereas other methods were evaluated with 36 obstacles.}
\begin{tabular}{@{}clccc>{\columncolor{gray!12}[\tabcolsep][1pt]}c@{}}
\toprule
& & \textbf{MPD w/} & \textbf{MPD w/} & \textbf{MPD w/} & \textbf{Safe MPD$^{\shield}$ } \\
\textbf{Metric} & \textbf{Model} & \textbf{Na\"ive Penalty} & \textbf{Projection$^{*}$} & \textbf{Guidance } & \textbf{(Ours)} \\
\midrule
\multirow{3}{*}{\textbf{Success Rate}} 
& (A) Bicycle & \textbf{100\%} & \textbf{100\%} & 89\% & \textbf{100\%} \\
& (B) TT & 64\% & N/A & 51\% & \textbf{100\%} \\
& (C) Accel. TT & 81\% & N/A & 80\% & \textbf{98\%} \\
\cmidrule(l){2-6}
\multirow{3}{*}{\textbf{Safety Violations}} 
& (A) Bicycle & \textbf{0\%} & \textbf{0\%} & 4\% & \textbf{0\%} \\
& (B) TT & 36\% & N/A & 43\% & \textbf{0\%} \\
& (C) Accel. TT & 19\% & N/A & 20\% & \textbf{0\%} \\
\cmidrule(l){2-6}
\multirow{3}{*}{\textbf{Computation Time}} 
& (A) Bicycle & \textbf{0.327 $\pm$ 0.009 s} & 1959.816 $\pm$ 360.710 s & 0.568 $\pm$ 0.016 s & \textbf{0.315 $\pm$ 0.010 s} \\
& (B) TT & \textbf{0.554 $\pm$ 0.027 s} & Time Out & 0.998 $\pm$ 0.029 s & \textbf{0.579 $\pm$ 0.023 s} \\
& (C) Accel. TT & \textbf{0.575 $\pm$ 0.025 s} & Time Out & 0.991 $\pm$ 0.028 s & 1.631 $\pm$ 0.024 s \\
\midrule
\textbf{Kinodynamically Feasible} &  & \cmark & \cmark & \xmark & \cmark \\
\bottomrule
\end{tabular}
\label{tab:comparison_mpd}
\vspace{-10pt}
\end{table*}

The (instantaneously) safe set~$\calS$ from \eqref{eq:safe_set} for tractor-trailers is formally defined as:
\begin{equation}\label{eq:tt_safe_set}
\begin{aligned}
\calS = \Big\{ & \vx \in \StateSpace \ \Big| \ |\theta^{1}-\theta^{2}| \le \theta_{\textup{max}} \quad \text{(no jackknifing)},\\
& \left(\calR_{\textup{tr}}(\vx) \cup \calR_{\textup{tl}}(\vx)\right) \cap \calO = \emptyset \quad \text{(no collision)} \Big\}.
\end{aligned}
\end{equation}
For the kinematic bicycle model, $\calS$ reduces to collision avoidance with obstacles only.

We compare our proposed method against the following safety strategies commonly used for diffusion planners:

\textbf{(i) Na\"ive Penalty}: This method adds a high penalty term to the cost function $J$~\eqref{eq:optimality_distribution} for any state that lies outside the safe set $\calS$.

\textbf{(ii) Projection}: It defines a projection operator~$\calP_{\calS}$ onto $\calS$ subject to system dynamics:
\begin{equation} \label{eq:projection}
\begin{split}
\raisetag{3.0ex}
\vu_t^{\star} &= \calP_{\calS}(\vu_{\textup{nom}, t};\vx_t) \coloneqq \argmin_{\vu_{t} \in \ControlSpace} \, \|\vu_{t}-\vu_{\textup{nom}, t}\|_{2}^{2} \\ 
\text{s.t.}\quad\quad & \hat{\vx}_{t+1} = f(\vx_t,\vu_t), \quad \hat{\vx}_{t+1} \in \calS.
\end{split}
\end{equation}
This projection~\eqref{eq:projection} is applied recursively to the $K$ trajectory samples $\calY_{1:K}$ during the diffusion process, and to the final output $Y^{(0)}$, instead of shielded rollout in \autoref{alg:safe_mpd}. Projection for diffusion models was introduced in \cite{christopher_constrained_2024} and extended to trajectory planning in \cite{romer_diffusion_2025} under the assumption that the safe set is convex and the dynamics is linear. We will show that its computational overhead becomes intractable for the non-convex and nonlinear tasks in \autoref{subsec:exp_results}.

\textbf{(iii) Guidance}: This method performs gradient descent on a given state trajectory~$\tau_{\vx}^{1} \coloneqq \tau_{\vx}$ to gradually steer it away from the unsafe set:
\begin{equation} \label{eq:guidance_compact}
\tau_{\vx}^{j+1} = \tau_{\vx}^{j} - \texttt{clip}\left(\alpha_{\textup{g}} \nabla \calJ\left( \tau_{\vx}^{j} \right), -\epsilon, \epsilon\right), \, j=1,\ldots,N_{\textup{iter}},
\end{equation}
where $\calJ(\cdot)$ indicates the amount of safety violation:
\begin{equation}
\begin{split}
\raisetag{13.0ex}
& \calJ(\tau_{\vx}) \coloneqq \sum_{t=1}^{\horizon} \Bigg[
\underbrace{\max\big(0,\ |\theta^{1}_{t}-\theta^{2}_{t}| - \theta_{\textup{max}}\big)}_{\text{(1) hitch-angle violation}} \\
& + \underbrace{\sum_{j=1}^{N_{\textup{obs}}} \max\big(0, R_{j} - \mathrm{dist}\big(\calR_{\textup{tr}}(\vx_t)\cup \calR_{\textup{tl}}(\vx_t),\calO_{j}\big)\big)}_{\text{(2) collision violation}} \Bigg].
\end{split}
\end{equation}
The number of guidance steps~$N_{\textup{iter}}$ and the step size~$\alpha_{\textup{g}}$ are set to 3 and 0.05, respectively. The guidance update is clipped to a maximum magnitude $\epsilon$~\cite{zhong_guided_2023}. $R_{j}$ is the safety margin for $j$-th obstacle.\footnote{Constructing a differentiable signed-distance function for our non-convex, articulated geometry requires costly computation, so we employ this more tractable objective.} As with projection, guidance is applied to the $K$ samples during diffusion and to $Y^{(0)}$, replacing shielded rollout in \autoref{alg:safe_mpd}.

All baselines are implemented within the MPD framework with $N=100$, $K=20,000$, $\horizon=50$, and $T_{s}=0.25$~s. The running and terminal costs penalize position error and heading error relative to the goal pose. Heading errors are wrapped to $[-\pi,\pi]$, so both forward and backward parking minimize the heading objective. For tractor-trailers, the positional cost uses the minimum of the tractor and trailer positional errors to the goal, allowing the cost to be minimized regardless of whether the vehicle parks in a forward or backward orientation.

All algorithms are implemented in Python using the JAX library~\cite{jax2018github} to enable GPU-accelerated rollouts. For \textbf{Q3}, we measure the total computation time from the initial noise $Y^{(N)}\sim\calN(\mathbf{0},\mathbf{I})$ to the final optimized trajectory. All experiments were conducted on an NVIDIA RTX 4090 GPU.

\subsection{Experimental Results \label{subsec:exp_results}}

The automated parking environment contains 36 obstacles. The goal position is set to the center of the designated parking slot, with the goal heading perpendicular to the slot's width. Initial states are uniformly sampled from the free space excluding trivial initial poses from which a straight maneuver could solve the task. Each method is evaluated over 100 randomized trials for each model. For hyperparameters, we tune the temperature~$\lambda$~\eqref{eq:optimality_distribution} and running/terminal cost weights on the kinematic bicycle model using Optuna~\cite{akiba_optuna_2019}. For \textbf{Q2}, we then \emph{reuse} these hyperparameters for both tractor-trailer systems. We report three metrics: (i) \textbf{Success Rate}: the percentage of trials where the tractor or trailer footprint enters the goal area without any safety violation; (ii) \textbf{Safety Violations}: the percentage of trials where any constraint (collision or jackknifing) is violated; (iii) \textbf{Computation Time}. Further implementation details are available in our code repository.

The quantitative results are summarized in \autoref{tab:comparison_mpd}. On the kinematic bicycle model, all methods except guidance achieve a 100\% success rate, since this task is comparatively easy and does not involve a trailer. Guidance exhibits safety violations across all systems because it provides no formal safety guarantee. Moreover, because it applies a post-processing correction to the state trajectory, the resulting trajectory may no longer be kinodynamically feasible. For the projection method only, we simplify the environment to contain just 6 circular obstacles near the goal position. Even then, it requires an average of 32.664 minutes for the kinematic bicycle model, although it guarantees safety. For the tractor-trailer systems, it hits the 1-hour timeout in all trials.

When tested on the more complex tractor-trailer systems, the limitations of the baselines become evident. Both the Na\"ive Penalty and Guidance methods show a significant drop in success rate and a sharp increase in safety violations, highlighting their inability to handle the challenges of higher-dimensional, non-convex problems. In contrast, our Safe MPD$^{\shield}$ maintains a perfect 0\% safety violation rate across all models and tasks \textbf{(Q1)}. 
It also achieves near-perfect success rates of 100\% and 98\% for the kinematic and acceleration-controlled tractor-trailer models, respectively (see Fig.~\ref{fig:main_results}(a)), demonstrating its scalability even without model-specific hyperparameter tuning \textbf{(Q2)}. Fig.~\ref{fig:main_results}(b) shows a successful trajectory that includes multi-point turns in a tight space, with no hitch-angle violations or collisions.
Regarding \textbf{Q3}, our method's computational performance is highly competitive. The primary overhead of the shielded rollout comes from the finite-horizon rollouts under the backup policy during validity checks, which are highly parallelizable and can be computed efficiently on a GPU. As a result, Safe MPD$^{\shield}$ achieves computation times comparable to the fastest (but unsafe) baseline on the kinematic models. The higher computation time for the acceleration-controlled model stems from its second-order dynamics. Verifying safety requires a longer horizon~$T_{B}$ with the backup policy~$\pi_{\textup{backup}}$ to ensure the system can be brought to a full stop within $\calC$. This sensitivity correlates with tighter input bounds: stricter input limits require more recovery steps to verify validity.

\begin{figure*}[t]
\centering
\includegraphics[width=0.93\linewidth]{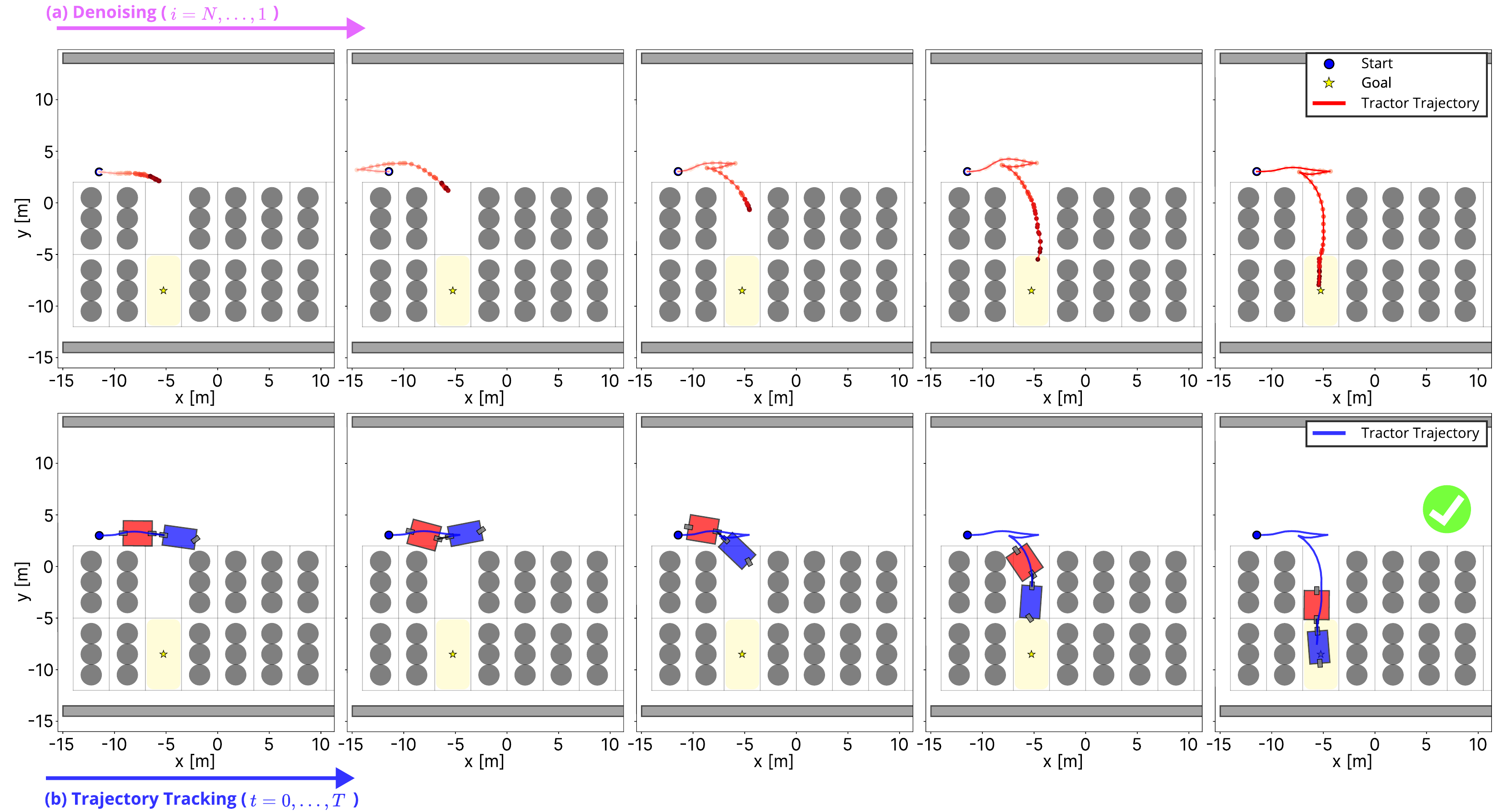}
\caption{Visualization of the diffusion process and final trajectory execution. (a) Snapshots of the diffusion trajectory for a kinematic tractor-trailer at different denoising steps, showing the refinement from a random, high-cost path (e.g., $i=100$) to an optimized solution at the final step ($i=0$). (b) Final trajectory with the tractor–trailer footprint rendered along the path; despite tight clearances, neither hitch-angle nor collision constraints are violated.
}
\label{fig:main_results}
\vspace{-10pt}
\end{figure*}

\textbf{Integration test with a tractor-trailer navigation stack.} For \textbf{Q4}, we integrate our algorithm into our existing tractor-trailer navigation framework~\cite{majd_gpu-accelerated_2025}, replacing the Hybrid A* planner with Safe MPD$^{\shield}$. The time to generate a feasible path drops from several minutes to under a second. Because our method generates kinodynamically feasible trajectories, the framework's downstream tracking controller (BR-MPPI~\cite{parwana_br-mppi_2025}) reliably tracks the resulting diffusion trajectories with multi-point turns (see our project page).


\section{CONCLUSION \label{sec:conclusion}}

In this work, we introduced Safe Model Predictive Diffusion (Safe MPD$^{\shield}$), a novel framework for trajectory optimization that integrates a formal safety shield directly into the denoising process of a model-based diffusion planner. Our method demonstrates three advantages that are critical for real-world robotics: the generated trajectories are (i) \emph{kinodynamically feasible by construction}, (ii) \emph{provably safe}, and (iii) \emph{computationally efficient} through batched rollouts and Monte Carlo score ascent on a GPU. The strong performance and sub-second planning times on complex, non-convex trajectory-planning problems, such as the tractor-trailer parking task, highlight the potential of Safe MPD$^{\shield}$ to become a powerful tool for real-world autonomous systems. Our future work will focus on the deployment and validation of this framework on physical hardware.

\addtolength{\textheight}{0 cm}   




\bibliographystyle{IEEEtran}
\typeout{}
\bibliography{references.bib}

\end{document}